\documentclass[letterpaper, 10 pt, conference]{ieeeconf}  

\IEEEoverridecommandlockouts                              

\usepackage{amsthm}
\usepackage{amsmath,amsfonts,amssymb}
\usepackage{graphicx}
\usepackage{algorithm}
\usepackage{algorithmic}
\usepackage{url}
\usepackage{booktabs}
\usepackage{mathtools}
\usepackage{subcaption}
\usepackage{flushend}

\DeclareMathOperator*{\argmin}{arg\,min}
\theoremstyle{plain}
\newtheorem{theorem}{Theorem}[section]
\newtheorem{proposition}[theorem]{Proposition}

\theoremstyle{definition}

\theoremstyle{remark}

\title{\LARGE \bf
Advantage-Guided Diffusion for Model-Based Reinforcement Learning
}

\author{Daniele Foffano$^{1}$, Arvid Eriksson$^{1}$, David Broman$^{2}$, Karl H. Johansson$^{1}$, Alexandre Proutiere$^{1}$
\thanks{$^{1}$Daniele Foffano, Arvid Eriksson, Karl H. Johansson and Alexandre Proutiere are in the Division of Decision and Control Systems of the EECS
School at KTH Royal Institute of Technology, Stockholm, Sweden. {\tt\small \{foffano, arveri, kallej, alepro\}@kth.se}}%
\thanks{$^{2}$David Broman is in the Division of Software and Computer Systems of the EECS
School at KTH Royal Institute of Technology, Stockholm, Sweden.
        {\tt\small dbro@kth.se}\newline All authors are affiliated with Digital Futures.}%
}

\begin{document}

\maketitle
\thispagestyle{empty}
\pagestyle{empty}

\begin{abstract}
Model-based reinforcement learning (MBRL) with autoregressive world models suffers from compounding errors, whereas diffusion world models mitigate this by generating trajectory segments jointly. However, existing diffusion guides are either policy-only—discarding value information—or reward-based, which becomes myopic when the diffusion horizon is short. We introduce \textbf{Advantage-Guided Diffusion} for MBRL (AGD-MBRL), which steers the reverse diffusion process using the agent’s advantage estimates so that sampling concentrates on trajectories expected to yield higher long-term return beyond the generated window. We develop two guides: (i) \emph{Sigmoid Advantage Guidance (SAG)} and (ii) \emph{Exponential Advantage Guidance (EAG)}. We prove that a diffusion model guided through SAG or EAG allows us to perform reweighted sampling of trajectories with weights increasing in state–action advantage—implying policy improvement under standard assumptions. Additionally, we show that the trajectories generated from AGD-MBRL follow an improved policy (that is, with higher value) compared to an unguided diffusion model. AGD integrates seamlessly with PolyGRAD-style architectures by guiding the \emph{state} components while leaving action generation policy-conditioned, and requires no change to the diffusion training objective. On MuJoCo control tasks (HalfCheetah, Hopper, Walker2D and Reacher), AGD-MBRL improves sample efficiency and final return over PolyGRAD, an online Diffuser-style reward guide, and model-free baselines (PPO/TRPO), in some cases by a margin of $2\times$. These results show that advantage-aware guidance is a simple, effective remedy for short-horizon myopia in diffusion-model MBRL.
\end{abstract}

\section{Introduction}

Reinforcement learning (RL) has produced agents that exceed human performance on a diverse range of domains \cite{mirhoseini2021graph, vinyals2019grandmaster, silver2017mastering, silver2016mastering, mnih2015human}. Model-based RL (MBRL) improves sample efficiency by learning a world model and planning within it. However, one-step \emph{autoregressive} world models suffer from compounding errors: the model predicts state $s_{t+1}$ from previous state and action $(s_t,a_t)$, then conditions on its own predictions thereafter; small one-step errors accumulate then over long horizons, leading to performance degradation. Prior work in the literature focuses on maintaining the autoregressive nature while trying to mitigate the compounding error via ensembles \cite{kurutach2018model}, stronger function approximation \cite{micheli2022transformers, robine2023transformer, schubert2023generalist}, or planning in latent space \cite{hafner2023mastering}.

Diffusion models offer a complementary approach: rather than predicting one step at a time, a denoising diffusion model learns a distribution over entire trajectories, sampled through an iterative denoising procedure \cite{janner2022planning, rigter2023world}. Since the diffusion model generates all the steps at the same time, the compounding error effect is drastically reduced.

In practice, diffusion world models are trained with a denoising score-matching objective (equivalent to maximum likelihood under Gaussian perturbations). In other words, they learn to mimic a trajectory distribution induced by a given behavior policy. A diffusion model will therefore generate trajectories similar to the one it was trained on. However, it might be useful to steer the sampling process towards specific areas of the distribution domain. For example, by sampling trajectories that are close to the current estimated policy \cite{rigter2023world,jackson2024policy}, that have higher cumulative reward \cite{janner2022planning}, or that are more challenging for the RL agent \cite{foffano2025adversarial}.

Steering the sampling process can be achieved using a \emph{guided diffusion model}. For instance, PolyGRAD \cite{rigter2023world} integrates a policy guide that favors trajectories following the current policy, effectively making the training algorithm on-policy. This guiding style does not leverage the value functions estimated by the RL agent, therefore discarding important information for the underlying sequential decision-making problem. On the other hand, Diffuser \cite{janner2022planning} steers sampling towards high-return trajectories through reward-based guidance. However, since this guide is based solely on the rewards of the generated steps, it can be myopic if the diffusion trajectory samples have a short horizon, as it is often the case for diffusion MBRL. This happens because the cumulative reward over a short horizon does not account for the value of future states, beyond the generated window, and ignores the long-term return of the current policy.

\textbf{Contributions.} We propose Advantage-Guided Diffusion for MBRL (AGD-MBRL). Using the advantage function learned by the RL agent, we can steer the generative process towards trajectories where the agent is expecting to learn more useful information. In addition, because advantage is formulated through expected values of current and future states, it accounts for rewards \emph{beyond} the sampled horizon. We introduce two guidance mechanisms:
(i) \emph{Exponential Advantage Guidance (EAG)}, where we use exponential tilting to generate trajectories with a higher advantage; and
(ii) \emph{Sigmoid Advantage Guidance (SAG)}, where we define the likelihood of a trajectory to provide useful information using the sigmoid function of the advantage function. Both variants can easily be integrated with PolyGRAD-style architectures by applying guidance to the \emph{state} components of the reverse diffusion process.

We make the following specific contributions:
\begin{enumerate}
    \item We show why steering sampling using cumulative returns results in myopic data generation and why the advantage function can help solving this problem. To this end, we introduce Advantage-Guided Diffusion (AGD) with two different formulations: EAG and SAG (Section \ref{sec:Advantage_guided_diff}).
    \item We formally show that guiding the diffusion process with EAG and SAG is equivalent to perform reweighted sampling of trajectories generated by an improved policy, compared to one of an unguided diffusion model. The weights are higher for trajectories encountering states with higher advantage (Propositions \ref{prop:SAG_improve} and \ref{prop:EAG_improve}). These formal results provide a principled explanation for the faster and more stable learning of AGD-MBRL.
    \item We empirically validate our method, AGD-MBRL, on Gym/MuJoCo tasks; we show improved performance, sample efficiency, and final return compared to other diffusion model-based and model-free baselines (Section \ref{sec:experiments}).
\end{enumerate}

\section{Related work}

We review prior work on MBRL and diffusion models for trajectory generation. We focus on approaches that mitigate compounding errors and on guided-diffusion methods most closely related to our setting.

\textbf{Model-Based reinforcement learning.} In MBRL, the agent uses a world model to generate imaginary trajectories, allowing for planning with reduced interactions with the environment. This is essential for settings where collecting new data is too expensive, dangerous, illegal, or otherwise impractical. Deep neural networks are effective to approximate the environment dynamics \cite{nagabandi2018neural, chua2018deep, kaiser2019model, jafferjee2020hallucinating, van2020plannable}. Specifically, variational autoencoders \cite{kingma2013auto} and transformers \cite{vaswani2017attention} have been successfully employed for world modeling \cite{micheli2022transformers, robine2023transformer, schubert2023generalist, hafner2020mastering, ha2018recurrent}, leading to state-of-the-art methods in terms of sample efficiency and performance \cite{hafner2023mastering}. However, these methods rely on bootstrapping to generate trajectory samples. The state prediction generated by the model is fed again as input to the model to predict the next state. As a result, these methods accumulate error along a trajectory, compounding prediction inaccuracies over time. This is commonly known as the \emph{compounding error problem} of model-based methods. Several works in the literature have addressed this problem \cite{kurutach2018model, hafner2023mastering, xiao2019learning}, trying to reduce compounding errors by focusing on minimizing the step-wise error or adapting the horizon so that the prediction error lies within an acceptable range.

Another line of work revolves around multi-step prediction, where multiple steps are predicted simultaneously. One such example is learning $H$ models to look $H$ steps in the future \cite{asadi2019combating}, but this approach results in significantly higher learning complexity compared to single-model approaches. Only recently, diffusion models have emerged as efficient multi-step MBRL methods \cite{janner2022planning, rigter2023world}.

\textbf{Diffusion models in RL.} Diffusion models generate data through an iterative denoising process \cite{sohl2015deep, ho2020denoising}. In addition to being powerful function approximators, they also provide a natural way to condition the data generation based on other information useful to the prediction task, through guided diffusion \cite{dhariwal2021diffusion, ho2022classifier}. Recently, diffusion models have gained significant attention in the RL community, where they have been used to learn a model of the system dynamics and generate trajectory segments by predicting sequences of states \cite{ajay2022conditional, zhu2023madiff}, actions \cite{chi2023diffusion, li2024crossway}, or both \cite{janner2022planning, liang2023adaptdiffuser, jackson2024policy}. Additionally, diffusion models have been employed for policy modeling \cite{wang2019benchmarking, hansen2023idql} and value function approximation \cite{mazoure2023value}. While most prior work focuses on the offline RL setting, we develop an online method. Our implementation directly builds on the PolyGRAD architecture \cite{rigter2023world}, a Dyna-inspired \cite{sutton1991dyna} MBRL method using policy-guided diffusion to approximate the environment dynamics.

\section{Preliminaries}
\label{sec:preliminaries}

We formalize the online MBRL setup, introduce notation for value and advantage functions, and review the guided-diffusion framework on which AGD-MBRL builds.

\textbf{Markov decision processes and RL.} Consider a Markov decision process (MDP) $M = \langle S, A, P, r, \gamma, \rho \rangle$, where $S$ and $A$ are the state and action spaces, respectively, $P(\cdot |s,a)$ is the transition probability function, $r(s,a)$ the reward function, $\gamma$ the discount factor, and $\rho$ the initial state distribution. By interacting with the MDP, an RL agent is able to collect sequences of states, actions, and rewards, forming trajectories $\boldsymbol \tau = (s_0, a_0, r_0, \dots, s_H, a_H, r_H)$ over a horizon $H>0$. The objective of the RL agent is to learn an optimal policy $\pi^\star$ maximizing the policy value $V_\pi(s) = \mathbb{E}_{\pi}[\sum_{i=0}^{\infty}\gamma^ir_{t+i+1}|s_t = s]$. Alternatively, we can define the optimal policy as the one maximizing the $Q$-value function $Q_\pi(s,a) = \mathbb{E}_{\pi}[\sum_{i=0}^{\infty}\gamma^ir_{t+i+1}|s_t = s, a_t = a]$. Several Policy Gradient methods combine the two value functions into a new objective \cite{schulman2015trust,schulman2017proximal}, the advantage function, defined as $A_\pi(s,a) = Q_\pi(s,a) - V_\pi(s)$. The advantage function can be considered as a state--action value function using $V_\pi$ as a baseline to reduce the variance. The value of a policy $\pi$ can be defined as $J(\pi) = \mathbb{E}_{s\sim\rho}[V_\pi(s)]$. In this paper, we consider an MBRL setting, where we use a diffusion model to approximate the distribution of trajectories under a given policy. Specifically, if $p^\pi$ denotes the true distribution of the trajectories $\boldsymbol{\tau}$ under policy $\pi$, the diffusion model samples trajectories with distribution $p_{\boldsymbol{\theta}}$ close to $p^\pi$. We adopt a Dyna-style approach \cite{sutton1991dyna}, where the diffusion model and the policy are iteratively updated. The diffusion model is updated using samples gathered from the target environment under the current policy, while the policy is improved using data generated by the model.

\textbf{Diffusion models.}\label{subsec:diff} To generate trajectories we use a parametrized diffusion model $p_{\boldsymbol \theta}(\boldsymbol\tau_0)$, from which we sample synthetic trajectories to perform policy training.

The generative process of diffusion models is performed by progressively refining noisy inputs through iterative denoising steps, $p_{\boldsymbol{\theta}}(\boldsymbol{\tau}_{i-1}| \boldsymbol{\tau}_i)$, which reverse the forward diffusion process, $q(\boldsymbol{\tau}_{i}| \boldsymbol{\tau}_{i-1})$. Each forward step gradually corrupts real data by adding random Gaussian noise. The $i$-th denoising step is typically parameterized as a Gaussian distribution
\begin{align}\label{eq:gausdiff}
    p_{\boldsymbol{\theta}}(\boldsymbol{\tau}_{i-1} \vert \boldsymbol{\tau}_i) = \mathcal{N}(\boldsymbol{\mu}_{\boldsymbol \theta}(\boldsymbol \tau_i, i), \boldsymbol \Sigma_i),
\end{align}
with learned mean and fixed covariance.

The joint probability of the denoised trajectories is defined as 
\begin{align}
    p_{\boldsymbol{\theta}}(\boldsymbol{\tau}_{0:N}) = p(\boldsymbol{\tau}_N) \prod^N_{i=1} p_{\boldsymbol{\theta}}(\boldsymbol{\tau}_{i-1} \vert \boldsymbol{\tau}_i),
\end{align}
where $p(\boldsymbol \tau_N) \approx \mathcal{N}(\boldsymbol 0, \boldsymbol I)$ and $\boldsymbol \tau_0$ is the reconstructed noiseless trajectory. Diffusion models optimize the variational lower bound on the negative log likelihood:
\begin{align}
\label{eq:vlb}
    \boldsymbol \theta^\star = \argmin_{\boldsymbol \theta} \mathbb{E}_{\boldsymbol \tau_0}[-\log p_{\boldsymbol \theta}(\boldsymbol \tau_0)],
\end{align}
where $p_{\boldsymbol \theta}(\boldsymbol \tau_0) = \int p_{\boldsymbol{\theta}}(\boldsymbol{\tau}_{0:N})\text{d}\boldsymbol{\tau}_{1:N}$.

\textbf{Guided diffusion.} By introducing an additional classifier $p(y|\boldsymbol \tau_0)$ we define the conditional diffusion model
\begin{align}
\label{eq:guided_diff}
    p_{\boldsymbol\theta}(\boldsymbol\tau_0|y) \propto p_{\boldsymbol\theta}(\boldsymbol\tau_0)p(y|\boldsymbol \tau_0).
\end{align}
By leveraging the information provided by the classifier, we can improve the generative performance of the diffusion model \cite{dhariwal2021diffusion}. Through the classifier’s gradient, we can guide the denoising process toward sampling data points that align with the classifier’s output. This method, called classifier-guided diffusion, follows a denoising procedure with the following step
\begin{align}
    p_{\boldsymbol{\theta}}(\boldsymbol{\tau}_{i-1}| \boldsymbol{\tau}_i, y) = \mathcal{N}(\mu_{\boldsymbol{\theta}}(\boldsymbol \tau_i,i) + \alpha\boldsymbol{\Sigma}_i \boldsymbol{g}_i, \boldsymbol{\Sigma}_i)
    \label{eq:gradient_guided_diffusion}
\end{align}
where $\boldsymbol{g}_i = \nabla_{\boldsymbol{\tau}}\log p(y|\boldsymbol{\tau})|_{\boldsymbol{\tau}=\mu_{\boldsymbol{\theta}}(\boldsymbol \tau_i,i)}$ and $\alpha$ is a hyperparameter determining the intensity of the guide.

Guided diffusion has received growing attention in the RL community, finding several applications both in the online and offline setting \cite{janner2022planning, rigter2023world, 
foffano2025adversarial, zheng2024safe, shribak2024diffusion, ki2025prior, ma2025efficient}.

\textbf{Problem statement.} In MBRL, an RL agent learns the optimal policy using data generated by an MDP model. In this paper, we focus on using a diffusion model $p_{\boldsymbol\theta}(\boldsymbol\tau_0)$ to approximate the trajectory distribution under a given policy $\pi$, that is, $ p(\boldsymbol\tau_0) = p_{\boldsymbol\theta}(s_1)\prod_{t=1}^H p(s_{t+1} | s_t, a_t) \pi(a_t |s_t)$. Generating data according to the model $p_{\boldsymbol\theta}(\boldsymbol\tau_0) \approx p(\boldsymbol\tau_0)$ mimics generating trajectories under the policy $\pi$. We are interested in how to improve the learning process of the RL agent by leveraging information provided by the advantage function $A_\pi(s,a)$. Using classifier-guided diffusion, we can steer the sampling process towards trajectories with a higher advantage, where the RL agent is expected to learn information that is more useful to converge to the optimal policy.

With AGD-MBRL, we address the problem of steering the diffusion sampling process of short-horizon trajectories. Given a policy $\pi$ and its advantage $A_\pi$, we steer the diffusion process toward trajectory segments with higher long-term value beyond the diffusion horizon, yielding more informative data for improving $\pi$.

\section{Motivating example}
\label{sec:example}
\begin{figure}
    \centering
    \includegraphics[width=\linewidth]{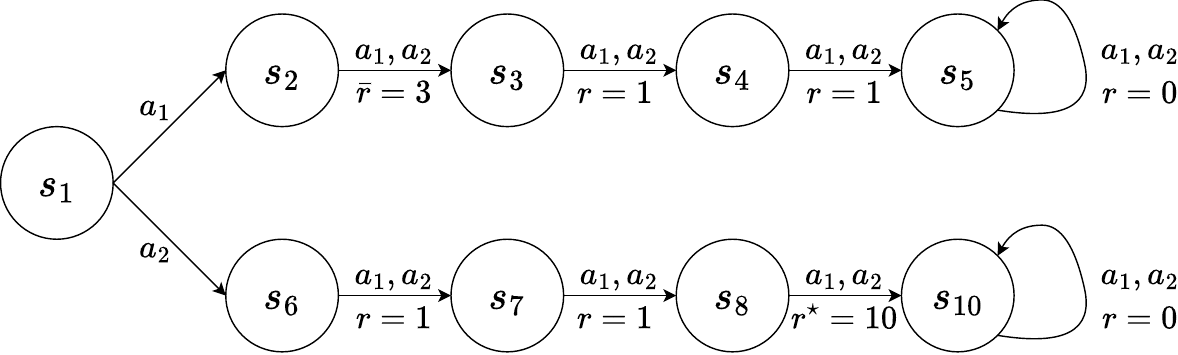}
    \caption{MDP illustrating the negative effects of short-sighted planning.}
    \label{fig:MDP}
\end{figure}
To motivate our contribution, we study the diffusion guide proposed in the Diffuser architecture \cite{janner2022planning} on the MDP presented in Figure \ref{fig:MDP}. In Diffuser, the authors rely on the control-as-inference graphical model \cite{levine2018reinforcement}. By introducing a binary random variable $O_t$, they define the probability of a single trajectory step $(s_t, a_t, r_t)$ being optimal\footnote{Meaning that taking action $a_t$ when observing $s_t$ leads to the highest reward $r_t$.} as $p(O_t = 1 | s_t, a_t) = \exp{(r_t)}$. The optimality for an entire trajectory $\boldsymbol\tau$ is then naturally defined as $p(O_{0:H} = 1 | \boldsymbol\tau) = \prod_{t=0}^H p(O_t = 1|s_t, a_t) = \exp{(\sum_{t=0}^H r_t)}$. The classifier-guided diffusion model can then be formulated according to (\ref{eq:guided_diff}), taking $p(y|\boldsymbol\tau_0) = p(O_{0:H} = 1 | \boldsymbol\tau_0)$ and $\boldsymbol g_i = \sum_{t=1}^H\nabla_{(s_t,a_t)} r(s_t, a_t)|_{(s_t,a_t) \in \mu_\theta(\tau_i, i)}$.  Using the cumulative reward as an estimate for the value of a trajectory works when $H$ corresponds to the real horizon $H^\star$ of the problem. However, this assumption does not hold for most diffusion applications in the RL setting, where $H \ll H^\star$ because of high computational costs. In that case, the model might steer the sampling process toward suboptimal areas of the domain by overestimating the value of a trajectory. The guiding method we propose in this paper avoids this by incorporating long-term rewards through the advantage function.

We can see the myopic effect of the guide based on cumulative reward by analyzing the MDP in Figure \ref{fig:MDP}. Suppose the diffusion model has a horizon $H = 3$, and that we follow a policy $\pi$ for which $\pi(a_1 | s_1) = \pi(a_2|s_1) = 0.5$ and then $\pi(a_1|s) =1, \forall s \neq s_1$. Two trajectories can be sampled from the diffusion model when starting in $s_1$: $\boldsymbol\tau_1 = (s_1, a_1, r, s_2, a_1, \bar{r}, s_3, a_1, r)$ and $\boldsymbol\tau_2 = (s_1, a_2, r, s_6, a_1, r, s_7, a_1, r)$. If the guide of our diffusion model steers the sampling towards the trajectory with highest cumulative reward, then we will more likely sample $\boldsymbol\tau_1$. However, the optimal trajectory to follow is $\boldsymbol\tau_2$ due to the unobserved future reward $r^\star = 10$. However, if we look at the advantage of each state--action pair after $4$ steps of value iteration with discount factor $\gamma = 1$, we get $A_\pi(s_1, a_1) = -7$ and $A_\pi(s, a) = 0, \forall(s,a) \neq (s_1, a_1)$. The advantage that is propagated along $\boldsymbol\tau_2$ is therefore much higher than the one along $\boldsymbol\tau_1$. This observation leads to the core idea of our paper: making the guided diffusion process less myopic by leveraging an advantage-based guide. 

\section{Advantage-guided diffusion}
\label{sec:Advantage_guided_diff}
We propose two guidance methods: Sigmoid Advantage Guidance (SAG) and Exponential Advantage Guidance (EAG).

\textbf{Sigmoid Advantage Guidance.} 
In Diffuser \cite{janner2022planning}, the optimality variable is defined via control-as-inference as a Bernoulli with parameter
$p(O_t{=}1\mid s_t,a_t)=\exp(r_t)$. This identity is exact under the assumption that rewards are shifted so that $r_t\le 0$ (hence $\exp( r_t)\in(0,1]$). This assumption is not easily transferable when defining the Bernoulli probability using the advantage function $A_\pi(s_t, a_t)$, since it can be both positive and negative, and it might not be bounded.

Instead of an exponential density, we instead propose to use the sigmoid function to model the step optimality probability. That is, we define
\begin{align*}
    p(O_t = 1 | s_t, a_t) &= \sigma(A_t) = \frac{1}{1+\exp{(-A_t)}}\\
    p(O_{1:H} = 1 | \boldsymbol\tau) &= \prod_{t=1}^H p(O_t | s_t, a_t) = \frac{1}{\prod_{t=1}^H (1+\exp{(-A_t)})}
\end{align*}
where, with a slight abuse of notation, we have $A_t = A_\pi(s_t, a_t)$. If we define a diffusion model $p_{\boldsymbol\theta}(\boldsymbol\tau_0)$ as in Section \ref{sec:preliminaries}, we can use classifier guided diffusion to derive a model $p_{\boldsymbol\theta}(\boldsymbol\tau_0 | A_\pi)$ that approximates the distribution of trajectories with high advantage as
\begin{align*}
    p_{\boldsymbol\theta} (\boldsymbol\tau_0 | O_{1:H} = 1) \propto p_{\boldsymbol\theta}(\boldsymbol\tau_0)p(O_{1:H} = 1| \boldsymbol\tau_0),
\end{align*}
where the denoising steps are taken according to (\ref{eq:gradient_guided_diffusion}) with $\boldsymbol{g}_i = \nabla_{\boldsymbol\tau} \log p(O_{1:H} = 1| \boldsymbol\tau)|_{\boldsymbol\tau = \mu_{\boldsymbol\theta}(\boldsymbol\tau_i, i)} = \sum^H_{t=1}\frac{1}{1+\exp{(A_\pi(s_t, a_t))}}\nabla_{(s_t,a_t)}A_\pi(s_t, a_t)$. Note that the right-hand side is proportional up to a normalization constant.

We now proceed to show that, given a diffusion model $p_{\boldsymbol\theta}$ approximating the trajectory distribution under policy $\pi_{\boldsymbol\theta}$, using SAG to steer the sampling process is equivalent to perform reweighted sampling of trajectories generated using a policy $\pi_{\boldsymbol\theta}^\sigma$ for which $J(\pi_{\boldsymbol\theta}^\sigma) \geq J(\pi_{\boldsymbol\theta})$, with a higher weight for trajectories that encounter states with a higher advantage. 

\begin{proposition}
\label{prop:SAG_improve}
    Define a diffusion model $p_{\boldsymbol\theta}(\boldsymbol\tau_0)$ as in Section \ref{sec:preliminaries}, approximating the trajectory distribution under policy $\pi$. Then, sampling trajectories from 
    \begin{align*}
        p_{\boldsymbol\theta} (\boldsymbol\tau_0 | O_{1:H}) \propto p_{\boldsymbol\theta}(\boldsymbol\tau_0)p(O_{1:H} | \boldsymbol\tau_0)
    \end{align*} 
    is equivalent to perform reweighted sampling with weights $Z_{\boldsymbol\theta}(\boldsymbol\tau) = \prod_{t=1}^H \mathbb{E}_{a\sim\pi_{\boldsymbol\theta}(\cdot | s_t)}\left[\sigma(A_{\pi_{\boldsymbol\theta}}(s_t, a))\right]$ of trajectories generated under a policy $\pi_{\boldsymbol\theta}^\sigma$ for which it holds $J(\pi^\sigma_{\boldsymbol\theta}) \geq J(\pi_{\boldsymbol\theta})$.
\end{proposition}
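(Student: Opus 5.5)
We write the guided density explicitly and regroup its factors step by step. By the problem statement, $p_{\boldsymbol\theta}(\boldsymbol\tau_0) = p_{\boldsymbol\theta}(s_1)\prod_{t=1}^H P(s_{t+1}\mid s_t,a_t)\,\pi_{\boldsymbol\theta}(a_t\mid s_t)$, and the SAG classifier factorises as $\prod_{t=1}^H \sigma(A_{\pi_{\boldsymbol\theta}}(s_t,a_t))$. Hence
\begin{align*}
p_{\boldsymbol\theta}(\boldsymbol\tau_0\mid O_{1:H}) \propto p_{\boldsymbol\theta}(s_1)\prod_{t=1}^H P(s_{t+1}\mid s_t,a_t)\,\pi_{\boldsymbol\theta}(a_t\mid s_t)\,\sigma(A_{\pi_{\boldsymbol\theta}}(s_t,a_t)).
\end{align*}
At each step we normalise the action factor by defining
\begin{align*}
\pi^\sigma_{\boldsymbol\theta}(a\mid s) = \frac{\pi_{\boldsymbol\theta}(a\mid s)\,\sigma(A_{\pi_{\boldsymbol\theta}}(s,a))}{\mathbb{E}_{a'\sim\pi_{\boldsymbol\theta}(\cdot\mid s)}[\sigma(A_{\pi_{\boldsymbol\theta}}(s,a'))]} .
\end{align*}
This is a valid policy because $\sigma\in(0,1)$, so the denominator is positive and finite.

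Substituting the definition back into the guided density gives
\begin{align*}
p_{\boldsymbol\theta}(\boldsymbol\tau_0\mid O_{1:H}) \propto Z_{\boldsymbol\theta}(\boldsymbol\tau)\,p_{\boldsymbol\theta}(s_1)\prod_{t=1}^H P(s_{t+1}\mid s_t,a_t)\,\pi^\sigma_{\boldsymbol\theta}(a_t\mid s_t).
\end{align*}
The last three factors form exactly the trajectory distribution under $\pi^\sigma_{\boldsymbol\theta}$. The prefactor $Z_{\boldsymbol\theta}(\boldsymbol\tau)$ is the product of the per-state normalisers, which depend only on the visited states. This is the claimed reweighted-sampling representation. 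The equality is exact when $p_{\boldsymbol\theta}$ equals the on-policy trajectory law; otherwise it holds up to the approximation $p_{\boldsymbol\theta}\approx p^\pi$.

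It remains to show $J(\pi^\sigma_{\boldsymbol\theta})\ge J(\pi_{\boldsymbol\theta})$, and this is the main step. We apply the policy improvement theorem, which needs, for every $s$,
\begin{align*}
\mathbb{E}_{a\sim\pi^\sigma_{\boldsymbol\theta}(\cdot\mid s)}[A_{\pi_{\boldsymbol\theta}}(s,a)]\ge 0 .
\end{align*}
Fix $s$ and write $X=A_{\pi_{\boldsymbol\theta}}(s,a)$ with $a\sim\pi_{\boldsymbol\theta}(\cdot\mid s)$. By the definition of the advantage, $\mathbb{E}[X]=0$. The required quantity equals
\begin{align*}
\frac{\mathbb{E}[X\sigma(X)]}{\mathbb{E}[\sigma(X)]}=\frac{\mathrm{Cov}(X,\sigma(X))}{\mathbb{E}[\sigma(X)]},
\end{align*}
where the second form uses $\mathbb{E}[X]=0$. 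The covariance of a random variable with a nondecreasing function of itself is nonnegative, by Chebyshev's association inequality. Equivalently, $\mathbb{E}[(X-X')(\sigma(X)-\sigma(X'))]\ge0$ for an independent copy $X'$, since $(X-X')$ and $(\sigma(X)-\sigma(X'))$ always have the same sign.

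Having nonnegative expected advantage at every state, we finish with the performance difference lemma:
\begin{align*}
J(\pi^\sigma_{\boldsymbol\theta})-J(\pi_{\boldsymbol\theta}) = \tfrac{1}{1-\gamma}\,\mathbb{E}_{s\sim d^{\pi^\sigma_{\boldsymbol\theta}}}\,\mathbb{E}_{a\sim\pi^\sigma_{\boldsymbol\theta}}[A_{\pi_{\boldsymbol\theta}}(s,a)] \ge 0 .
\end{align*}
Here $d^{\pi^\sigma_{\boldsymbol\theta}}$ is the discounted state-visitation distribution of $\pi^\sigma_{\boldsymbol\theta}$. The lemma presumes $\gamma<1$ and a true, bounded advantage; this is where the "standard assumptions" enter, and I would state them explicitly. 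As a sanity check, if $\pi_{\boldsymbol\theta}$ is deterministic, the reweighting leaves the policy unchanged and equality holds.
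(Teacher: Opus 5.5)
Your proposal is correct and follows the paper's proof essentially step for step. It uses the same factorisation of the guided density, the same tilted policy $\pi^\sigma_{\boldsymbol\theta}$ with per-state normaliser $Z_{\boldsymbol\theta}(s_t)$, and the same Chebyshev (nonnegative covariance) argument with $\mathbb{E}_{a\sim\pi_{\boldsymbol\theta}}[A_{\pi_{\boldsymbol\theta}}(s,a)]=0$; you close with the performance difference lemma instead of the paper's policy improvement theorem, which is an interchangeable standard step here, and your strictly positive denominator and explicit $\gamma<1$/bounded-advantage assumptions slightly tighten the paper's wording.
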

\begin{proof}
    Since $p_{\boldsymbol\theta}(\boldsymbol\tau_0)$ approximates the trajectory distribution under policy $\pi_{\boldsymbol\theta}$, we can write
    \begin{align*}
        p_{\boldsymbol\theta}(\boldsymbol\tau_0) = p_{\boldsymbol\theta}(s_1)\prod_{t=1}^H p_{\boldsymbol\theta}(s_{t+1} | s_t, a_t) \pi_{\boldsymbol\theta}(a_t |s_t)
    \end{align*}
    So when steering the sampling with SAG we obtain
    \begin{align*}
        p_{\boldsymbol\theta} (&\boldsymbol\tau_0 | O_{1:H}) \propto p_{\boldsymbol\theta}(\boldsymbol\tau_0)p(O_{1:H} | \boldsymbol\tau_0)\\
        &= p(O_{1:H} | \boldsymbol\tau_0)p_{\boldsymbol\theta}(s_1)\prod_{t=1}^H p_{\boldsymbol\theta}(s_{t+1} | s_t, a_t) \pi_{\boldsymbol\theta}(a_{t} |s_{t})\\
        &= p_{\boldsymbol\theta}(s_1)\prod_{t=1}^H p_{\boldsymbol\theta}(s_{t+1} | s_t, a_t) \pi_{\boldsymbol\theta}(a_{t} |s_{t})p(O_{t} | s_{t}, a_{t})\\
        &= p_{\boldsymbol\theta}(s_1)\prod_{t=1}^H p_{\boldsymbol\theta}(s_{t+1} | s_t, a_t) \pi^\sigma_{\boldsymbol\theta}(a_{t} |s_{t})Z_{\boldsymbol\theta}(s_{t})
    \end{align*}
    where $\pi^\sigma_{\boldsymbol\theta}(a_t|s_t) = \frac{1}{Z_\theta(s_t)} \pi_{\boldsymbol\theta}(a_{t} |s_{t})p(O_{t} | s_{t}, a_{t})$ and $Z_{\boldsymbol\theta}(s_t) = \mathbb{E}_{a\sim\pi_{\boldsymbol\theta}(\cdot | s_t)}\left[\sigma(A_{\pi_{\boldsymbol\theta}}(s_t, a))\right]$.

    From the policy improvement theorem \cite[Section 4.2]{sutton1998reinforcement} we know that if $\mathbb{E}_{a\sim\pi^\sigma_{\boldsymbol\theta}(\cdot | s_t)}\left[A_{\pi_{\boldsymbol\theta}}(s_t, a) \right] \geq 0$ then $J(\pi^\sigma_{\boldsymbol\theta}) \geq J(\pi_{\boldsymbol\theta})$. We can rewrite the expectation as
    \begin{align*}
        \mathbb{E}_{a\sim\pi^\sigma_{\boldsymbol\theta}}&\left[A_{\pi_{\boldsymbol\theta}}(s_t, a) \right] \\
        &= \frac{\mathbb{E}_{a\sim\pi_{\boldsymbol\theta}(\cdot | s_t)}\left[\sigma(A_{\pi_{\boldsymbol\theta}}(s_t, a))A_{\pi_{\boldsymbol\theta}}(s_t, a) \right]}{\mathbb{E}_{a\sim\pi_{\boldsymbol\theta}(\cdot | s_t)}\left[\sigma(A_{\pi_{\boldsymbol\theta}}(s_t, a))\right]}
    \end{align*}
    Since $\mathbb{E}_{a\sim\pi_{\boldsymbol\theta}(\cdot | s_t)}\left[A_{\pi_{\boldsymbol\theta}}(s_t, a)\right] = 0$ by definition of the advantage, then $\mathbb{E}_{a\sim\pi_{\boldsymbol\theta}(\cdot | s_t)}\left[\sigma(A_{\pi_{\boldsymbol\theta}}(s_t, a))A_{\pi_{\boldsymbol\theta}}(s_t, a) \right] \geq 0$ by Chebyshev's integral inequality. By definition of the sigmoid function we also have $\mathbb{E}_{a\sim\pi_{\boldsymbol\theta}(\cdot | s_t)}\left[\sigma(A_{\pi_{\boldsymbol\theta}}(s_t, a))\right] \geq 0$, so $\mathbb{E}_{a\sim\pi^\sigma_{\boldsymbol\theta}(\cdot | s_t)}\left[A_{\pi_{\boldsymbol\theta}}(s_t, a) \right] \geq 0$ holds. By the policy improvement theorem we then have $J(\pi^\sigma_{\boldsymbol\theta}) \geq J(\pi_{\boldsymbol\theta})$, which concludes the proof. 
\end{proof}

\textbf{Exponential Advantage Guidance.} When dealing with trajectory generation, defining optimality based on a single step is not straightforward. Here, we follow an energy-based approach to steer the generative process towards trajectories with high advantage. More specifically, define the energy function $E(\boldsymbol\tau) = \sum_{t=1}^H A_\pi(s_t, a_t)$. Then, we can define the energy-guided diffusion model $ p_{\boldsymbol\theta}(\boldsymbol\tau_0 | E(\boldsymbol\tau_0)) \propto p_{\boldsymbol\theta}(\boldsymbol\tau_0)\exp{(E(\boldsymbol\tau_0))}$. Intuitively, the sampling process of this energy-guided diffusion model is steered towards trajectories with a high cumulative advantage. That is, we will generate more trajectories where the agent is expected to improve its behavior towards an optimal solution. Using the guided diffusion framework, as we did for SAG, we can sample from $p_{\boldsymbol\theta}(\boldsymbol\tau_0 | E(\boldsymbol\tau_0))$ by taking diffusion steps according to (\ref{eq:gradient_guided_diffusion}) with $\boldsymbol g_i = \nabla_{\boldsymbol\tau} \log \exp{(E(\boldsymbol\tau))} = \nabla_{\boldsymbol\tau} \sum_{t=1}^H A_\pi(s_t, a_t) = \sum_{t=1}^H \nabla_{(s_t, a_t)}A_\pi(s_t, a_t)$.
Following an argument similar to Proposition \ref{prop:SAG_improve}, we can show that using EAG to steer the generative process leads to a reweighted sampling method for trajectories generated with an improved policy $\pi^{\exp}$ for which $J(\pi^{\exp}_{\boldsymbol\theta}) \geq J(\pi_{\boldsymbol\theta})$.

\begin{proposition}
\label{prop:EAG_improve}
    Define a diffusion model $p_{\boldsymbol\theta}(\boldsymbol\tau_0)$ as in Section \ref{sec:preliminaries}, approximating the trajectory distribution under policy $\pi$. Then, sampling trajectories from 
    \begin{align*}
        p_{\boldsymbol\theta}(\boldsymbol\tau_0 | E(\boldsymbol\tau_0)) \propto p_{\boldsymbol\theta}(\boldsymbol\tau_0)\exp{(E(\boldsymbol\tau_0))}
    \end{align*} 
    is equivalent to perform reweighted sampling with weights $Z_{\boldsymbol\theta}(\boldsymbol\tau) = \prod_{t=1}^H \mathbb{E}_{a\sim\pi_{\boldsymbol\theta}(\cdot | s_t)}\left[\exp(A_{\pi_{\boldsymbol\theta}}(s_t, a)))\right]$ of trajectories generated under a policy $\pi_{\boldsymbol\theta}^{\exp}$ for which it holds $J(\pi^{\exp}_{\boldsymbol\theta}) \geq J(\pi_{\boldsymbol\theta})$.
\end{proposition}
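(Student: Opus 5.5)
The argument follows the proof of Proposition~\ref{prop:SAG_improve}, with the sigmoid replaced by the exponential. First, use the assumption that $p_{\boldsymbol\theta}$ approximates the trajectory distribution under $\pi_{\boldsymbol\theta}$ to write
\begin{align*}
p_{\boldsymbol\theta}(\boldsymbol\tau_0) = p_{\boldsymbol\theta}(s_1)\prod_{t=1}^H p_{\boldsymbol\theta}(s_{t+1}\mid s_t,a_t)\,\pi_{\boldsymbol\theta}(a_t\mid s_t).
\end{align*}
The tilting factor $\exp(E(\boldsymbol\tau_0)) = \prod_{t=1}^H \exp(A_{\pi_{\boldsymbol\theta}}(s_t,a_t))$ factorizes over time steps. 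Each factor can therefore be absorbed into the corresponding policy term.

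Define
\begin{align*}
Z_{\boldsymbol\theta}(s_t) = \mathbb{E}_{a\sim\pi_{\boldsymbol\theta}(\cdot\mid s_t)}\left[\exp(A_{\pi_{\boldsymbol\theta}}(s_t,a))\right]
\end{align*}
and the tilted policy
\begin{align*}
\pi^{\exp}_{\boldsymbol\theta}(a\mid s) = \frac{\pi_{\boldsymbol\theta}(a\mid s)\exp(A_{\pi_{\boldsymbol\theta}}(s,a))}{Z_{\boldsymbol\theta}(s)}.
\end{align*}
Substituting gives
\begin{align*}
p_{\boldsymbol\theta}(\boldsymbol\tau_0\mid E(\boldsymbol\tau_0)) \propto p_{\boldsymbol\theta}(s_1)\prod_{t=1}^H p_{\boldsymbol\theta}(s_{t+1}\mid s_t,a_t)\,\pi^{\exp}_{\boldsymbol\theta}(a_t\mid s_t)\,Z_{\boldsymbol\theta}(s_t).
\end{align*}
This is exactly a trajectory law under $\pi^{\exp}_{\boldsymbol\theta}$, reweighted by $Z_{\boldsymbol\theta}(\boldsymbol\tau) = \prod_t Z_{\boldsymbol\theta}(s_t)$. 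For this to make sense we must assume $Z_{\boldsymbol\theta}(s)<\infty$, for example bounded advantages, so that $\pi^{\exp}_{\boldsymbol\theta}$ is well defined. I will state this integrability condition explicitly, since it is implicit in the statement. Note also that $Z_{\boldsymbol\theta}(s)>0$ automatically.

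For the improvement claim, invoke the policy improvement theorem as in Proposition~\ref{prop:SAG_improve}. It suffices to show that $\mathbb{E}_{a\sim\pi^{\exp}_{\boldsymbol\theta}(\cdot\mid s)}[A_{\pi_{\boldsymbol\theta}}(s,a)]\geq 0$ for every $s$. This expectation equals
\begin{align*}
\frac{\mathbb{E}_{a\sim\pi_{\boldsymbol\theta}}\left[e^{A}A\right]}{\mathbb{E}_{a\sim\pi_{\boldsymbol\theta}}\left[e^{A}\right]}.
\end{align*}
The denominator is positive, so only the numerator needs attention. Since $x\mapsto e^x$ and $x\mapsto x$ are both nondecreasing, Chebyshev's integral (association) inequality applied to the random variable $A=A_{\pi_{\boldsymbol\theta}}(s,a)$ under $a\sim\pi_{\boldsymbol\theta}(\cdot\mid s)$ gives
\begin{align*}
\mathbb{E}[e^{A}A]\geq \mathbb{E}[e^{A}]\,\mathbb{E}[A].
\end{align*}
The right-hand side is $0$ because $\mathbb{E}_{a\sim\pi_{\boldsymbol\theta}}[A_{\pi_{\boldsymbol\theta}}(s,a)]=0$ by definition of the advantage.

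An alternative that avoids citing Chebyshev is the pointwise bound $(e^{x}-1)x\geq 0$. Taking expectations gives $\mathbb{E}[e^{A}A]\geq\mathbb{E}[A]=0$ directly. I would include this one-line argument for self-containedness.

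The main obstacle is not the algebra but the bookkeeping. First, the advantage weights in $\pi^{\exp}_{\boldsymbol\theta}$ must be taken with respect to the true $A_{\pi_{\boldsymbol\theta}}$, not the model's transitions. Second, the policy improvement theorem needs the nonnegativity at every state, which the pointwise argument provides. Third, the finiteness of $Z_{\boldsymbol\theta}$ has to hold. Granting these, the performance difference lemma $J(\pi')-J(\pi)=\frac{1}{1-\gamma}\mathbb{E}_{s\sim d^{\pi'}}\mathbb{E}_{a\sim\pi'}[A_\pi(s,a)]$, or equivalently the cited policy improvement theorem, yields $J(\pi^{\exp}_{\boldsymbol\theta})\geq J(\pi_{\boldsymbol\theta})$.
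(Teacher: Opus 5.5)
Your proposal is correct and follows essentially the same route as the paper: factorize $p_{\boldsymbol\theta}(\boldsymbol\tau_0)$, absorb each factor $\exp(A_{\pi_{\boldsymbol\theta}}(s_t,a_t))$ into the policy term to define $\pi^{\exp}_{\boldsymbol\theta}$ and $Z_{\boldsymbol\theta}(s_t)$, and rewrite $\mathbb{E}_{a\sim\pi^{\exp}_{\boldsymbol\theta}}[A_{\pi_{\boldsymbol\theta}}(s,a)]$ as a ratio whose numerator is nonnegative by Chebyshev's inequality together with $\mathbb{E}_{a\sim\pi_{\boldsymbol\theta}}[A_{\pi_{\boldsymbol\theta}}(s,a)]=0$, then conclude with the policy improvement theorem. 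Your additions are sound refinements of details the paper leaves implicit: the finiteness condition on $Z_{\boldsymbol\theta}$, the pointwise bound $(e^x-1)x\ge 0$, and the explicit requirement that nonnegativity hold at every state.
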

\begin{proof}
    As in Proposition \ref{prop:SAG_improve}, we write
    \begin{align*}
        p_{\boldsymbol\theta}(&\boldsymbol\tau_0 | E(\boldsymbol\tau_0)) \propto p_{\boldsymbol\theta}(\boldsymbol\tau_0)\exp{(E(\boldsymbol\tau_0))}\\
        &= \exp{(E(\boldsymbol\tau_0))}p_{\boldsymbol\theta}(s_1)\prod_{t=1}^H p_{\boldsymbol\theta}(s_{t+1} | s_t, a_t) \pi_{\boldsymbol\theta}(a_{t} |s_{t})\\
        &= p_{\boldsymbol\theta}(s_1)\prod_{t=1}^H p_{\boldsymbol\theta}(s_{t+1} | s_t, a_t) \pi_{\boldsymbol\theta}(a_{t} |s_{t})\exp{(A^{\pi_{\boldsymbol\theta}}(s_t, a_t))}\\
        &= p_{\boldsymbol\theta}(s_1)\prod_{t=1}^H p_{\boldsymbol\theta}(s_{t+1} | s_t, a_t) \pi^{\exp}_{\boldsymbol\theta}(a_{t} |s_{t})Z_{\boldsymbol\theta}(s_t)
    \end{align*}
    where $\pi^{\exp}_{\boldsymbol\theta}(a_{t} |s_{t}) = \frac{1}{Z_{\boldsymbol\theta}(s_t)}\pi_{\boldsymbol\theta}(a_{t} |s_{t})\exp{(A_{\pi_{\boldsymbol\theta}}(s_t, a_t))}$ and $Z_{\boldsymbol\theta}(s_t) = \mathbb{E}_{a \sim \pi_{\boldsymbol\theta}(\cdot | s_t)}\left[ \exp{(A_{\pi_{\boldsymbol\theta}}(s_t, a))}\right]$. Following the same reasoning as in Proposition \ref{prop:SAG_improve}, we have 
    \begin{align*}
        \mathbb{E}_{a\sim\pi^{\exp}_{\boldsymbol\theta}(\cdot | s_t)}&\left[A_{\pi_{\boldsymbol\theta}}(s_t, a_t) \right] = \\ 
        &= \frac{\mathbb{E}_{a\sim\pi_{\boldsymbol\theta}(\cdot | s_t)}\left[\exp(A_{\pi_{\boldsymbol\theta}}(s_t, a))A_{\pi_{\boldsymbol\theta}}(s_t, a) \right]}{\mathbb{E}_{a\sim\pi_{\boldsymbol\theta}(\cdot | s_t)}\left[\exp(A_{\pi_{\boldsymbol\theta}}(s_t, a))\right]},
    \end{align*}
    and $\mathbb{E}_{a\sim\pi^{\exp}_{\boldsymbol\theta}(\cdot | s_t)}\left[A_{\pi_{\boldsymbol\theta}}(s_t, a_t) \right]\geq 0$. So, by the policy improvement theorem, we have $J(\pi^{\exp}_{\boldsymbol\theta}) \geq J(\pi_{\boldsymbol\theta})$, concluding the proof. 
\end{proof}

\textbf{Why exponential and sigmoid functions?} 
Both guides apply a monotonically increasing transformation of the advantage $A(s_t,a_t)$ so that the generative process will be steered toward higher-advantage state–action pairs. While other functions can be used to accomplish this task, we chose the \emph{sigmoid} and the \emph{exponential} for their simple implementation and their statistical and optimization properties.

\paragraph{Sigmoid function} through the sigmoid function we are able to express the probability of a step being optimal as a Bernoulli probability $p(O_t | s_t, a_t) = \sigma(A_\pi(s_t, a_t)) = \frac{1}{1+\exp{(-A_\pi(s_t, a_t))}}$. This bounded function leads to a conservative preference weighting for each state-action pair, based on its estimated advantage. For very high advantage values, the sigmoid function will tend to $1$: intuitively, this compensates for an overestimate of the advantage function learned by the RL agent.
\paragraph{Exponential function} this approach follows an opposite direction with respect to that of the sigmoid function. As the probability of generating state-action pairs with a high advantage exponentially increases, it strongly steers the sampling process toward more promising regions of the state-action space. While this method might converge faster to an optimal policy in the case of a good estimate of the advantage function, it is more vulnerable to overestimation (or underestimation).

\section{Algorithms}

\begin{figure*}[t]
\centering
\begin{minipage}[t]{0.48\textwidth}
\begin{algorithm}[H]
\caption{AGD-MBRL}
\label{alg:ADG-MBRL}
\begin{algorithmic}[1]
\REQUIRE environment $E$
\ENSURE policy $\pi_{\boldsymbol\phi}$, advantage function $A_\omega$, denoising model $\bar{p}_{\boldsymbol\theta}$, data buffer $\mathcal{D}$
\STATE \textbf{Initialize:} $\pi_{\boldsymbol\phi}$, $\bar{p}_{\boldsymbol\theta}$, $\mathcal{D}$
\REPEAT
    \STATE Sample $\boldsymbol\tau \sim E$ using $\pi_{\boldsymbol\phi}$, add $\boldsymbol\tau$ to $\mathcal{D}$
    \STATE Improve $p_{\boldsymbol\theta}$ on $\mathcal{D}$
    \STATE Sample $\{\hat{\boldsymbol\tau}\} \sim p_{\boldsymbol\theta}$ with EAG or SAG \hfill $\triangleright$ Algorithm 2
    \STATE Improve $\pi_{\boldsymbol\phi}$ on $\{\hat{\boldsymbol\tau}\}$ using RL
\UNTIL{convergence or budget exhausted}
\end{algorithmic}
\end{algorithm}
\end{minipage}
\hfill
\begin{minipage}[t]{0.48\textwidth}
\begin{algorithm}[H]
\caption{Advantage-Guided Trajectory Sampling}
\label{alg:guided_sampling}
\begin{algorithmic}[1]
\REQUIRE denoising model $p_{\boldsymbol{\theta}}$; steps $N$; buffer $\mathcal{D}$; scale $\alpha$
\ENSURE Sample $\hat{\boldsymbol{\tau}}_0$
\STATE $\hat{\boldsymbol{\tau}}_N \sim \mathcal{N}(\mathbf{0}, \mathbf{I})$
\STATE $\boldsymbol{s}_0 \sim \mathcal{D}$
\FOR{$i = N$ \TO $1$}
    \STATE Set $\hat{\boldsymbol{s}}_0 \gets \boldsymbol{s}_0$ in $\hat{\boldsymbol{\tau}}_i$
    \STATE Set $\boldsymbol{g}_i \gets \nabla_{\hat{\boldsymbol{\tau}}_i}\log f(\hat{\boldsymbol{\tau}}_i)$ \hfill $\triangleright$ EAG or SAG
    \STATE $\hat{\boldsymbol{\tau}}_{i-1} \sim \mathcal{N}\big(\mu_{\boldsymbol{\theta}}(\hat{\boldsymbol{\tau}}_i, i) + \alpha \boldsymbol{\Sigma}_i \boldsymbol{g}_i,\ \boldsymbol{\Sigma}_i\big)$
\ENDFOR
\STATE \textbf{return} $\hat{\boldsymbol{\tau}}_0$
\end{algorithmic}
\end{algorithm}
\end{minipage}
\end{figure*}

In this section we present AGD-MBRL\footnote{The official implementation for AGD-MBRL can be found on GitHub: \url{https://github.com/danielefoffano/AGD-MBRL}}, a Dyna-style algorithm that iteratively improves the dynamics model $p_{\boldsymbol\theta}(\boldsymbol\tau)$ and the RL agent's policy $\pi_{\boldsymbol\phi}$. By leveraging the advantage guides presented in the previous section, AGD-MBRL is able to generate trajectories with a higher advantage, allowing the RL agent to observe trajectories that achieve higher expected rewards more often. Our implementation is based on the PolyGRAD architecture \cite{rigter2023world}.

AGD-MBRL is presented in Algorithm \ref{alg:ADG-MBRL}. Using policy $\pi_{\boldsymbol\phi}$ we collect trajectory rollouts from the real environment (line 3) which are then used to improve the diffusion model $p_{\boldsymbol\theta}$. The diffusion model is trained by minimizing the negative log-likelihood as in (\ref{eq:vlb}). Note, however, that (\ref{eq:vlb}) is a simplified training objective with respect to the one used by diffusion implementations. In practice, the mean $\mu_{\boldsymbol\theta}(\boldsymbol\tau_i, i)$ for (\ref{eq:gausdiff}) is computed in closed form \cite{ho2020denoising} using a noise prediction network $\epsilon_{\boldsymbol \theta}(\hat{\boldsymbol \tau}_i, i)$, trained by minimizing the following loss, derived from (\ref{eq:vlb}),
\begin{align}
\label{eq:diffusion_loss}
    \mathcal{L}(\boldsymbol\theta) = \mathbb{E}_{i, \epsilon, \boldsymbol \tau_0}[||\epsilon - \epsilon_{\boldsymbol \theta}(\boldsymbol \tau_i, i)||^2],
\end{align}
where $i\sim {\cal U}(\{1,\dots, N\})$ is the diffusion step, $\epsilon \sim \mathcal{N}(0, 1)$ is the target noise and $\boldsymbol \tau_i$ is the corrupted data point derived from trajectory $\boldsymbol \tau_0\sim {\cal D}$, after $i$ steps of the \textit{forward} diffusion process adding noise $\epsilon$. With advantage-guided diffusion, we are able to sample new trajectories, on which we will improve our RL agent using Advantage Actor-Critic \cite{mnih2016asynchronous} (respectively line 5 and 6 of Algorithm \ref{alg:ADG-MBRL}). Advantage-guided sampling is presented in Algorithm \ref{alg:guided_sampling}. Notice that in line $5$ we use a general gradient formula $\boldsymbol{g}_i \gets \nabla_{\hat{\boldsymbol{\tau}}_i}\log f(\hat{\boldsymbol{\tau}}_i)$. For EAG and SAG we have, respectively, $f(\hat{\boldsymbol{\tau}}_i) = \exp{(E(\hat{\boldsymbol{\tau}}_i))}$ and $f(\hat{\boldsymbol{\tau}}_i) = p(O_{1:H}|\hat{\boldsymbol{\tau}}_i)$. The sampling procedure follows naturally from classifier guided diffusion theory, as seen in Section \ref{sec:preliminaries} and in Section \ref{sec:Advantage_guided_diff}. At every step of the denoising process we perform inpainting \cite{lugmayr2022repaint} by setting the starting state $\hat s_0$ of the generated trajectory $\hat {\boldsymbol\tau}_i$ to the real state $s_0$. To ensure that the generated actions are consistent with the generated states, we combine our method with the PolyGRAD diffusion guide \cite{rigter2023world}, which generates a sequence of actions guided by the gradient of the policy $\pi_{\boldsymbol\phi}$.

\section{Experiments}
\label{sec:experiments}

We evaluate the performance of AGD-MBRL (with SAG and EAG) on four environments from the continuous control suite of Mujoco, namely: HalfCheetah, Hopper, Walker2D, and Reacher. These are some of the most common environments used to evaluate RL algorithms, and allow us to easily compare our method with a broad range of baselines. The proposed methods and baselines are trained on $1.5$M environment steps. We compare AGD-MBRL with four baselines, equally divided between model-based (with a diffusion model) and model-free methods. We describe them briefly for completeness:
\begin{itemize}
    \item PolyGRAD, \cite{rigter2023world}, a diffusion-based MBRL algorithm, on which AGD-MBRL is based. PolyGRAD uses policy-guided diffusion to generate trajectories, training an RL policy in an online model-based setting. It improves sample efficiency but does not account for the trajectory value when sampling from the diffusion model.
    \item Online Diffuser is our adaptation of the Diffuser architecture \cite{janner2022planning} to the online setting. Leveraging the PolyGRAD architecture, we steer the diffusion sampling process using the Diffuser guide described in Section \ref{sec:example}. This baseline guides the diffusion model using the cumulative reward of the generated trajectory as an estimate of its future value. This might result in myopic planning, as described in Section \ref{sec:example}.
    \item TRPO \cite{schulman2015trust} and PPO \cite{schulman2017proximal}, are two efficient model-free baselines. TRPO constrains policy updates through a KL-divergence trust region. PPO uses a clipped surrogate objective, improving its computational efficiency.
\end{itemize}

\begin{figure*}
    \centering
    \includegraphics[width=\linewidth]{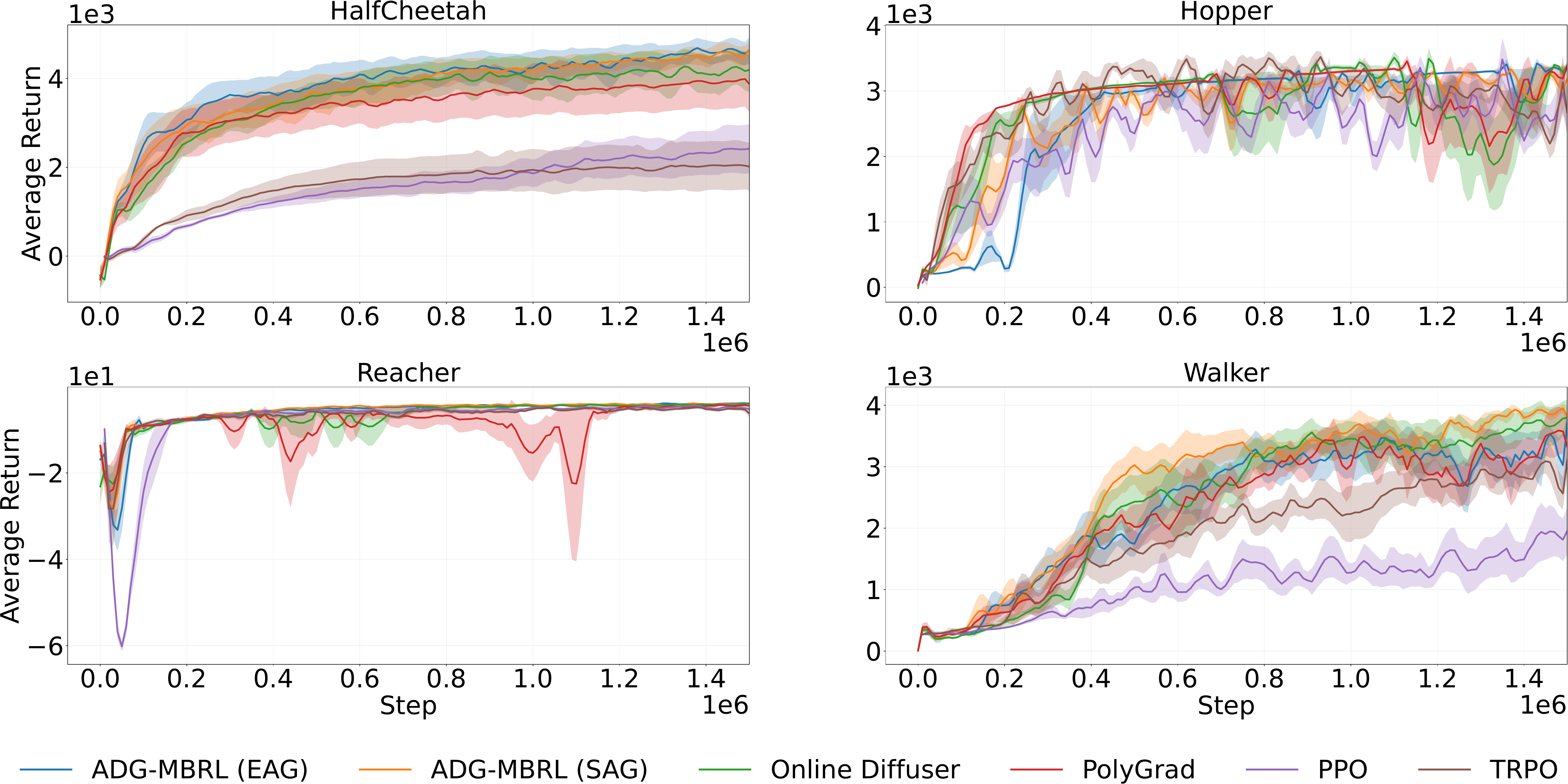}
    \caption{Training curves for the Mujoco environments: HalfCheetah, Hopper, Walker and Reacher. Shaded areas indicate the standard error. The curves have been smoothed by a running average for better readability.}
    \label{fig:results}
\end{figure*}

Our implementation builds directly on top of PolyGRAD, adding the relevant guides by adding only a few lines of code. We follow the same training procedure and use the same MLP-based diffusion model, trained by minimizing the $L_2$ loss from (\ref{eq:diffusion_loss}) with ADAM as optimizer. The MDP uses a learnable embedding of the diffusion step $i$, which is common for diffusion methods \cite{janner2022planning}. We implemented our version of Online Diffuser in the same way, changing the diffusion guide accordingly.  For TRPO and PPO we used the implementation from Stable-Baselines3 \cite{stable-baselines3}, with their fine-tuned hyperparameters.

\textbf{Results and analysis.} Table \ref{tab:mujoco_results_se} presents the final performance of the guides and the comparison methods. For HalfCheetah, Walker, and Reacher, we observe a clear pattern: AGD-MBRL constantly outperforms all the other baselines, sometimes by a large margin (as in HalfCheetah). PolyGRAD serves as a solid baseline and outperforms model-free methods. This is expected since model-based methods are more sample efficient than their model-free alternatives, while requiring increased computational resources. The Diffuser guide helps improving the PolyGRAD performance, indicating that generating trajectories with a higher cumulative reward helps the agent to converge to a more optimal solution. However, AGD-MBRL is able to learn policies that lead to higher cumulative reward, showing that leveraging the learned value function leads to better planning for the RL agent. In the Hopper environment we see that the advantage-based guides perform slightly worse than PolyGRAD. We believe this is due to stochasticity, as Hopper is the easiest environment among the ones we chose, where all algorithms, even the model-free ones, seem to converge to similarly performing policies. When closely observing the results of the two EAG and SAG guides, we can see that in a task like HalfCheetah EAG is able to significantly outperform SAG. We believe this is a confirmation of the analysis provided at the end of section \ref{sec:Advantage_guided_diff}: EAG greatly increases the sampling frequency of trajectories with a high advantage, while SAG is more conservative, due to the bounded sigmoid function. In an environment like HalfCheetah, where the value function is realatively easy to estimate and exploit, EAG is expected to quickly reach a better performance than SAG. However, in environments where the optimal value function is harder to approximate, like Walker, a more conservative sampling leads SAG to outperform EAG already in the early training stages. This can be clearly observed in Figure \ref{fig:results}, where we provide the training curves for all methods in the considered environments. In Figure~\ref{fig:results}, AGD-MBRL shows more stable learning than the other diffusion-based baselines. On Hopper and Reacher, both EAG and SAG reduce the magnitude and frequency of performance regressions. A plausible explanation is that advantage-guided sampling induces optimistic yet directed exploration: it biases the generating process toward states and actions with high estimated value while updating the advantage estimate by collecting new data, keeping synthetic rollouts aligned with the on-policy improvement direction. By contrast, PolyGRAD generates trajectories aligned with the current policy but ignores value information, so it will sample low-advantage actions more often and suffer more instability. Diffuser-style reward guidance helps, but is affected by its short-horizon: without accounting for the policy value it increases the risk of suboptimal exploration.

\begin{table}[h!]
\centering
\caption{Final training return for MuJoCo continuous-control tasks.}
\resizebox{\linewidth}{!}{
\begin{tabular}{@{}lrrrr@{}}
\toprule
 & \textbf{Hopper} & \textbf{HalfCheetah} & \textbf{Walker} &
 \textbf{Reacher}\\ \midrule
AGD-MBRL (EAG) &
3328.61 $\pm$ 39.45 &
\textbf{4864.80 $\boldsymbol\pm$ 156.83} &
\textbf{3794.24 $\boldsymbol\pm$ 165.03}&
\textbf{ -3.90 $\boldsymbol\pm$ 0.13}\\
AGD-MBRL (SAG) &
3268.42 $\pm$ 24.68&
\textbf{4642.25 $\boldsymbol\pm$ 162.52} &
\textbf{3844.67 $\boldsymbol\pm$ 361.58}&
\textbf{-3.87 $\boldsymbol\pm$ 0.37}\\
PolyGRAD + Diffuser Guide &
\textbf{3353.97 $\boldsymbol\pm$ 25.30} &
4188.43 $\pm$ 489.66 &
3777.73 $\pm$ 294.98 &
-3.96 $\pm$ 0.40\\
PolyGRAD &
3346.99 $\pm$ 52.39 &
$3879.16 \pm 626.40$ &
$3489.48 \pm 456.70$ &
$-4.48 \pm 0.13$\\
PPO &
$2998.90 \pm 432.28$ &
$2408.20 \pm 546.33$ &
$1894.03 \pm 349.06$ &
$-5.17 \pm 0.57$\\
TRPO &
$3270.27 \pm 273.04$ &
$2014.91 \pm 539.64$ &
$3090.80 \pm 267.79$ &
$-6.22 \pm 0.85$\\
\bottomrule
\end{tabular}}
\vspace{0.1cm}
\label{tab:mujoco_results_se}
\vspace{-0.5cm}
\end{table}

\section{Conclusions and future work}

We introduced Advantage-Guided Diffusion as a way to enhance trajectory generation in MBRL by guiding the diffusion generative process toward regions of high long-term reward. We showed that the proposed guides, SAG and EAG, can significantly improve the performance of existing frameworks on complex optimal control tasks. As we discussed in Section \ref{sec:Advantage_guided_diff}, other functions could be used to steer the generative diffusion process.  We believe that an exciting future research avenue lies in exploring different guiding functions that might lead to additional benefits for the agent learning process.

The key limitation with using diffusion models for generating trajectories is the generation time. Diffusion models are computationally expensive, as the iterative forward process for generating trajectories requires a significant number of generative steps. Interesting future research directions include how to make generation faster, for example by generating samples in a latent space \cite{rombach2021high} or using flow matching \cite{lipman2022flow}.






\section*{Acknowledgements}
The computations were enabled by resources provided by the National Academic Infrastructure for Supercomputing in Sweden (NAISS), partially funded by the Swedish Research Council through grant agreement no. 2022-06725. This work was supported in part by Swedish Research Council Distinguished
Professor Grant 2017-01078, Project Grant 2024-05043, Knut and Alice Wallenberg Foundation Wallenberg Scholar Grant, and Swedish Strategic Research Foundation FUSS SUCCESS Grant. 


\bibliographystyle{ieeetr}
\bibliography{ref}
\end{document}